\colorlet{shadecolor}{yellow}
\newcommand{\E}{\mathbb{E}}
\newcommand{\Var}{\mathrm{Var}}
\DeclareSymbolFont{extraup}{U}{zavm}{m}{n}
\DeclareMathSymbol{\varheart}{\mathalpha}{extraup}{86}
\DeclareMathSymbol{\vardiamond}{\mathalpha}{extraup}{87}
\newtheoremstyle{mystyle}
  {}
  {}
  {\itshape}
  {}
  {\bfseries}
  {.}
  { }
  {\thmname{#1}}
\theoremstyle{mystyle}
\newtheorem*{thm*}{Theorem}
\newtheorem{thm}{Theorem}
\newtheorem*{corol*}{Corollary}
\newtheorem{corol}{Corollary}
\newtheorem{lemma}{Lemma}
\theoremstyle{definition}
\newtheorem*{definition}{Definition}
\tikzset{
    >=stealth',
    punkt/.style={
           rectangle,
           rounded corners,
           draw=black, very thick,
           text width=6.5em,
           minimum height=2em,
           text centered},
    pil/.style={
           ->,
           thick,
           shorten <=2pt,
           shorten >=2pt,}
}
\begin{document}
\title{LS-SVR as a Bayesian RBF network}

\author[1]{Diego P. P. Mesquita}
\author[2]{Luis A. Freitas}
\author[2]{Jo\~ao P. P. Gomes}
\author[2]{C\'esar L. C. Mattos}
\affil[1]{Aalto University}
\affil[2]{Federal University of Cear\'a }



\maketitle

\begin{abstract}
We show theoretical similarities between the Least Squares Support Vector Regression (LS-SVR) model with a Radial Basis Functions (RBF) kernel and maximum a posteriori (MAP) inference on Bayesian RBF networks with a specific Gaussian prior on the regression weights. Although previous works have pointed out similar expressions between those learning approaches, we explicit and formally state the existing correspondences. We empirically demonstrate our result by performing computational experiments with standard regression benchmarks. Our findings open a range of possibilities to improve LS-SVR by borrowing strength from well-established developments in Bayesian methodology.
\end{abstract}


%


\section{Introduction}

Statistical learning theory has been studied for general function estimation from data since the late 1960's \cite{vapnik1998statistical}. However, it was only widely adopted in practice after the introduction of the learning algorithms known as Support Vector Machines (SVMs) \cite{vapnik1999overview}. Using the so-called \emph{kernel trick}, which replaces dot products between features and model parameters by evaluations of a \emph{kernel} function, SVMs can learn nonlinear relations from training patterns by solving a convex optimization problem \cite{scholkopf2002learning}.

An important variant of the SVM is the Least Squares Support Vector Machine (LS-SVM) \cite{suykens1999least}, which is obtained by making all data points support-vectors. LS-SVM avoids the constrained quadratic optimization step of standard SVMs by replacing the training procedure with one that reduces to solving a system of linear equations, which can be performed via ordinary least squares. 


The first SVM formulation was derived for classification tasks, but it has been readily adapted to tackle regression problems, being usually named Support Vector Regression (SVR) \cite{drucker1997support}. Similarly, the regression counterpart of LS-SVM is the LS-SVR \cite{suykens2002least}.

It is well known that the LS-SVR model is closely related to the kernel ridge regression (KRR) formulation \cite{saunders1998ridge,cristianini2000introduction}. 
As detailed in the forthcoming sections, those models only differ by the inclusion of a bias term. Interestingly, one can find additional relations by recalling that KRR itself can be seen as the \textit{maximum a posteriori} (MAP) solution of a Gaussian process (GP) model \cite{rasmussen2006gaussian}. Standard SVMs \cite{seeger2000bayesian,sollich2002bayesian} and LS-SVMs \cite{gestel2002bayesian} also present a close relation to the MAP solution of a GP classifier. Similar relations have also been explored in regression settings with the SVR model \cite{gao2002probabilistic, chu2004bayesian}. On the other hand, it can be shown that GPs are a limit for Bayesian neural networks (BNNs) with infinitely many hidden neurons with Gaussian priors in their weights \cite{neal1995bayesian}. The kernel functions related to such limit was derived in \cite{williams1997computing} for standard multilayer perceptron (MLP) with sigmoid activation functions and Gaussian radial basis functions (RBF) neural networks.

Many works have explored theoretical relations between learning models. 
For instance, Bayesian interpretations for SVMs are proposed in \cite{zhu2014bayesian} as an effort to push forward the interface between large-margin learning and Bayesian nonparametrics. Recently, authors in \cite{flam2017mapping} have presented a method for mapping a GP prior to the weights' prior of a BNN. The authors emphasize how their proposal is able to bring properties of a variety of kernels without explicitly considering the weights parameter space. In \cite{garriga2018deep}, it is presented how a residual convolutional neural network with an appropriate prior over the weights and biases behave similarly to a GP. They use such correspondence to introduce a convolutional GP kernel with few hyperparameters. The authors in \cite{lee2018deep} have also tackled the relation between deep neural networks and GPs, deriving an equivalent kernel function for the latter. Those works motivate us to explore further the missing links between learning approaches.

Given the previous observations, our work aims to explore the relations between LS-SVR and Bayesian RBF neural networks. Such theoretical direct connection is not readily available in the literature, and it is required to provide ways to improve least squares based kernel methods with recent advances in the domains of Bayesian methods and neural networks.


Some works, such as \cite{van2001financial,sun2017bayesian}, have already considered the Bayesian formalism in the context of LS-SVM models, but only for the sake of hyperparameter tuning. In \cite{de2010approximate} the authors aim to provide confidence intervals for the model predictions. However, frequentist machinery is used, which makes is dependent on the parameter estimation procedure. The probabilistic interpretation of the LS-SVM classifier presented in \cite{gestel2002bayesian} is the closest work in our direction. However, in that work the authors propose a Bayesian inference framework for the model bias and weights in the LS-SVM primal space. An inference procedure for the model hyperparameters is also presented in \cite{gestel2002bayesian}. As we will detail later, our main result considers the task of performing inference in the variables of the LS-SVR dual space, in the regression setting. 

Thus, in the present work, instead of only considering some evidence-based inference step, we tackle the regression setting with LS-SVR using a RBF kernel and directly relate its model formulation with the RBF neural network within the Bayesian view. Fig. \ref{FIG:MODELS_DIA} illustrates the aforementioned learning approaches connections, as well as the one we cover in the remaining sections.

As follows we summarize both models and derive relations between their formulations. Afterwards, we perform computational experiments to empirically assess our theoretical results. Finally, we draw concluding remarks and discuss the implications of our findings.

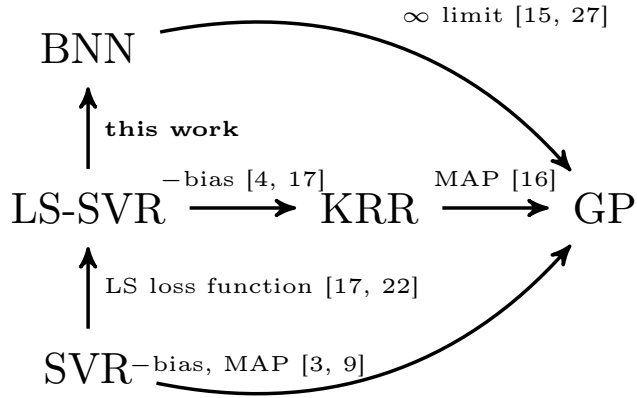
\begin{figure}[!t]
	\centering
\resizebox{0.75\linewidth}{!}{
\begin{tikzpicture}[node distance=1cm, auto,]
 \node (lssvr) {LS-SVR};
 \node[below=of lssvr, yshift=0.2cm] (svr) {SVR}
   edge[pil, ->] node[auto, right] {\tiny LS loss function \cite{saunders1998ridge,suykens2002least}} (lssvr);
 \node[right=of lssvr] (krr) {KRR}
   edge[pil, <-] node[auto, above] {\tiny $-$bias \cite{saunders1998ridge,cristianini2000introduction}} (lssvr);
 \node[right=of krr] (gp) {GP}
   edge[pil, <-] node[auto, above] {\tiny MAP \cite{rasmussen2006gaussian}} (krr)
   edge[pil, <-, bend right=-30] node[auto, above, pos=0.8] {\tiny $-$bias, MAP \cite{gao2002probabilistic, chu2004bayesian}} (svr);
 \node[above=of lssvr, yshift=-0.2cm] (bnn) {BNN}
   edge[pil, <-] node[auto] {\tiny \textbf{this work}} (lssvr)
   edge[pil, bend right=-30] node[auto] {\tiny $\infty$ limit \cite{neal1995bayesian, williams1997computing}} (gp);
\end{tikzpicture}
 }
	\caption{Diagram of relations between the regression learning models considered in the present study. Relevant references are highlighted in each edge.}
	\label{FIG:MODELS_DIA}
\end{figure}

\section{Preliminaries}

In the subsequent subsections, we review the basics of LS-SVR and RBF networks while setting up relevant notation. 

\subsection{LS-SVR in a nutshell}

Let a dataset be composed by $N$ inputs $\bm{x}_i \in \mathbb{R}^D, \forall i \in \{1, \cdots, N\}$, and their correspondent outputs  $y_i \in \mathbb{R}$. We can map the inputs into the outputs using some unknown nonlinear function $f$:
\begin{equation}
f(\bm{x}_i) = \langle \bm{w},\varphi(\bm{x}_i) \rangle + b,~~\mbox{with}~~\bm{w} \in \mathbb{R}^Q, b \in \mathbb{R},
\end{equation}
where $\langle \cdot,\cdot  \rangle$ denotes the dot-product, $\bm{w}$ is a vector of weights, $b$ is a bias and $\varphi(\cdot) : \mathbb{R}^D \rightarrow \mathbb{R}^Q$ is a nonlinear map into some $Q$-dimensional feature space. 
In the LS-SVR framework, we consider the minimization of the following functional~\cite{saunders1998ridge,suykens2002least}: 
\begin{equation}\label{opti_lssvr}
J(\bm{w},e) = \frac{1}{2}\Vert\bm{w}\Vert^2_2 + \gamma \frac{1}{2}\sum_{i=1}^{N}e^{2}_{i},
\end{equation}
subject to the equality constraints
\begin{equation}\label{lssvr_constraints}
y_i = \langle \bm{w},\varphi(\bm{x}_i) \rangle + b + e_i,~~~i=1,2,\ldots,N,
\end{equation}
where $e_i=y_i-f(\bm{x}_i)$ is the error due to the $i$-th input pattern and $\gamma>0$ is a regularization parameter. 

The related Lagrangian function of the optimization problem is
\begin{equation}
L(\bm{w},b,\bm{e},\boldsymbol{\alpha}) =  \frac{1}{2}\Vert\bm{w}\Vert^2_2 + \gamma \frac{1}{2}\sum_{i=1}^{N}e^{2}_{i} -  \sum_{i=1}^{N}\alpha_i [\langle\bm{w},\varphi(\bm{x}_i)\rangle + b + e_i - y_i]
\end{equation}
where the $\alpha_i$ are the Lagrange multipliers. The Karush-Kuhn-Tucker  optimality conditions are given by
\begin{equation}
\left\{
\begin{array}{ll}
\frac{\partial L}{\partial \bm{w}} & = 0~~\rightarrow~~\bm{w}=\sum_{i=1}^{N}\alpha_i\varphi(\bm{x}_i), \\
\frac{\partial L}{\partial e_i} & = 0~~\rightarrow~~\sum_{i=1}^{N}\alpha_i=0, \\
\frac{\partial L}{\partial b} & = 0~~\rightarrow~~\alpha_i=\gamma e_i, \\
\frac{\partial L}{\partial \alpha_i} & = 0~~\rightarrow~~\langle \bm{w},\varphi(\bm{x}_i)\rangle + b + e_i - y_i = 0.
\end{array}
\right.
\end{equation}
The optimal solution $\bm{\theta}_{\text{LS}}$ depends only on $\bm{\theta} = \begin{bmatrix} b \\ \bm{\alpha} \end{bmatrix}$ and is obtained by solving the following linear system:
\begin{equation}\label{eq:lssvr_sys}
\Psi
\begin{bmatrix}
  b \\  \bm{\alpha}
\end{bmatrix} =
\begin{bmatrix}
  0 \\  \bm{y}
\end{bmatrix}, \quad \text{where } \Psi = \begin{bmatrix}
  0~~~~     & \mathbf{1}^\top  \\ 
\mathbf{1}~~~~ & \bm{\Omega} + \gamma^{-1}I
\end{bmatrix}.
\end{equation}
The solution if given by:
\begin{equation}
\label{eq_linear_solution}
\bm{\theta}_{\text{LS}} = \Psi^{-1} \begin{bmatrix}
  0 \\  \bm{y}
\end{bmatrix}.     
\end{equation}
In Eq. \ref{eq:lssvr_sys} we have defined the vectors $\bm{y}=[y_1,y_2,\ldots,y_n]^\top$, $ \bm{1} = [1,1,\ldots,1]^\top$, $\bm{\alpha}=[\alpha_1,\alpha_2,\ldots,\alpha_n]^\top$. We have also defined the kernel matrix $\bm{\Omega} \in \mathbb{R}^{N \times N}$, whose entries are $\Omega_{i,j}=k(\bm{x}_i,\bm{x}_j) = \langle \varphi(\bm{x}_i),\varphi(\bm{x}_j)\rangle$. The so-called kernel function $k(\cdot, \cdot)$ is usually chosen to be a RBF (or Gaussian), expressed by
\begin{equation*}
k(\bm{x}_i, \bm{x}_j) = \exp\{ - c^2 \|\bm{x}_i - \bm{x}_j \|^2 \},    
\end{equation*}
where $\|\cdot\|$ represents the Euclidean norm and $c$ is a kernel hyperparameter.

Predictions for an input $\bm{x}$ with the LS-SVR model can be performed using the optimized $\bm{\alpha}$ and $b$ values:
\begin{equation}\label{aproxi_function}
f(\bm{x}) = \sum_{i=1}^{N}\alpha_ik(\bm{x},\bm{x}_i) + b.
\end{equation}

\subsection{RBF networks in a nutshell}

Radial basis function networks are neural models with a simple structure \cite{haykin2009neural}: a given input pattern $\bm{x} \in \mathbb{R}^D$ is nonlinear transformed by a hidden layer where each unit $i$ is given by
\begin{equation*}
h_i(\bm{x}) = k(\| \bm{x} -  \bm{x}_i \|).
\end{equation*}
The model output is obtained by a linear transformation:
\begin{equation}
\label{EQ_RBF}
f(\bm{x}) = \sum_{i=1}^{N}\alpha_i h_i(\bm{x}) + b,
\end{equation}
where $\alpha_i$ is the $i$-th weight of the output layer and $b$ is the bias term. Although some classic RBF formulations do not present a bias term \cite{haykin2009neural}, many authors do include it in the output layer \cite{chen1991orthogonal}. We note that the latter case is more general, since we could make $b = 0$.

If we choose $h_i(\bm{x}) = k(\| \bm{x} -  \bm{x}_i \|) = \exp\{ - c^2 \|\bm{x} - \bm{x}_i \|^2 \}$, the output of the RBF network would be the same as in Eq. \ref{aproxi_function}. Note that one could opt to build a hidden layer with $M < N$ hidden units, due to inherent data redundancies. In that case, a subset of the training data or the centroids found by a clustering method, e.g., the K-means algorithm, could be used as the centers of the hidden layer \cite{haykin2009neural}.

From Eq. \ref{EQ_RBF} we can see that the optimal solution for $\begin{bmatrix}  b \\  \bm{\alpha} \end{bmatrix}$ is obtained from the following linear system:
\begin{equation}
\label{eq:linear_rbf}
\psi
\begin{bmatrix}
  b \\  \bm{\alpha}
\end{bmatrix} = \bm{y}, \quad \text{where } \psi = [\bm{1} \quad \bm{\Omega}],
\end{equation}
where we have defined the matrix $\bm{\Omega} \in \mathbb{R}^{N \times N}$, whose entries are computed by $\Omega_{i,j}=k(\| \bm{x}_i -  \bm{x}_j \|)$. The solution $\bm{\theta}_{RBF}$ is given by:
\begin{equation}
\label{eq_rbf_solution}
\bm{\theta}_{\text{RBF}} = (\psi^\top \psi)^{-1} \psi^\top \bm{y}.     
\end{equation}

In the present paper we are interested in the Bayesian treatment of the RBF network. In that formulation, we begin by defining the observation model
\begin{equation}
p( y_i |\bm{x}_i , \bm{\theta}) = \mathcal{N}(y_i | f(\bm{x_i}),  \sigma^2),
\end{equation} 
where $f(\bm{x},\bm{\theta}|\mathcal{D})$ is the same as in the right side of Eq. \ref{EQ_RBF}.
We complete the model by placing a prior distribution $p(\bm{\theta})$ on the parameters $\bm{\theta} = ( b , \bm{\alpha} )^\top$ and proceed to compute the posterior given the training data:
\begin{equation}
\label{eq:posterior}
p(\bm{\theta}|\mathcal{D}) = \frac{ p(\mathcal{D}|\bm{\theta}) p(\bm{\theta})}{ \int p(\mathcal{D}|\bm{\theta}) p(\bm{\theta}) \mathrm{d} \bm{\theta} } \propto p(\mathcal{D}|\bm{\theta}) p(\bm{\theta}) 
\end{equation}
 Once in possession of the posterior $p(\bm{\theta}|\mathcal{D})$, we can compute the distribution of the network output $y$ for an input $\bm{x}$ \cite{neal1995bayesian}:
\begin{equation}
p(y | \bm{x}, \mathcal{D}) = \int p(y | \bm(x), \bm{\theta}) p(\bm{\theta} | \mathcal{D})  \mathrm{d}\bm{\theta}, 
\end{equation}
for which the mean, i.e., the expected network output, is given by:
\[
\mathbb{E}\{y\} = \int y\, p(y | \bm{x}, \mathcal{D}) \mathrm{d} y.
\]
 
We note that one could also place priors on kernel parameters, on the variance $\sigma^2$ and on the centers locations in the hidden layer \cite{barber1998radial}, but for the following sections choosing only $p(\bm{\theta})$ is enough.

\section{Main result}
\label{sec:main_result}

We now present a result that shows an equivalence in limit between the estimation procedure of the LS-SVR and the MAP estimation of RBF networks. For this purpose, we introduce the concept of a Bayesian $\epsilon$-LS-SVR.

\begin{definition}[Bayesian $\epsilon$-LS-SVR] 
Given $\epsilon>0$, a Bayesian $\epsilon$-LS-SVR is a Bayesian RBF network model defined using all training points as centroids in the hidden layer and is described as
    \begin{align*}
        &y_n \sim \mathcal{N}\Bigg( \sum_{i=1}^{N} \alpha_i k(\bm{x}_n, \bm{x}_i) + b,\, \sigma^2 = 1 \Bigg), \\
        & [b, \bm{\alpha}]^\top \sim \mathcal{N}(\mu, \Sigma),
    \end{align*}
    where the covariance matrix $\Sigma$ is such that
    \begin{equation*}
    \Sigma^{-1} =
    \begin{bmatrix}
    \epsilon      & \gamma^{-1} \mathbf{1}^\top   \\ 
    \gamma^{-1} \mathbf{1} & \mathbf{1} \mathbf{1}^\top +  2 \gamma^{-1} \Omega +  \gamma^{-2} I
    \end{bmatrix},
    \end{equation*}
    and the mean vector $\mu$ is given by:
    \begin{equation*}
     \mu = \gamma^{-1} \Sigma
    \begin{bmatrix}
        0 \\
        y
    \end{bmatrix}.    
    \end{equation*}      
\end{definition}

And show that, there is an analytic form for the exact difference between MAP estimates in an  Bayesian $\epsilon$-LS-SVR and its traditional counterpart.

\begin{lemma}
  Given $\epsilon>0$, the parameters $\bm{\theta}_\text{LS}$ of an $LS-SVR$ and the MAP estimates $\mu^\star$ of a Bayesian $\epsilon$-LS-SVR defined under the same aforementioned conditions, the gap $\bm{\theta}_{\text{LS}} - \mu^\star$ is exactly:
  \begin{equation*}
 \frac{ \epsilon} {1 + \epsilon s_1 (\Psi^\top \Psi)^{-1} s_1^\top } (\Psi^\top \Psi)^{-1} s_1 s_1^\top \Psi^{-1} \begin{bmatrix} 0 \\  \bm{y} \end{bmatrix}, 
\end{equation*}
where $s_1$ is a $(N+1)$-dimensional column vector of zeros except for a $1$ in its first element.
\end{lemma}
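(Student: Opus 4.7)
The plan is to cast both estimators as functions of the same vector $\Psi\begin{bmatrix}0\\ \bm y\end{bmatrix}$, show that the MAP uses a rank-one perturbation of the Gram matrix that the LS-SVR implicitly inverts, and then close the gap with a single application of the Sherman--Morrison identity.

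First, I would write down the posterior mean of the Bayesian $\epsilon$-LS-SVR in closed form. Because the likelihood is Gaussian with covariance $I$ and the prior on $[b,\bm{\alpha}]^\top$ is $\mathcal{N}(\mu,\Sigma)$, conjugacy gives
\[
  \mu^\star \;=\; \bigl(\psi^\top\psi + \Sigma^{-1}\bigr)^{-1}\bigl(\psi^\top \bm y + \Sigma^{-1}\mu\bigr),
\]
where $\psi=[\bm 1\ \bm\Omega]$ is the design matrix from Eq.~\ref{eq:linear_rbf}. Plugging in the definition of $\mu$ immediately yields $\Sigma^{-1}\mu = \gamma^{-1}\begin{bmatrix}0\\ \bm y\end{bmatrix}$, so the two algebraic identities that carry the whole argument are (i) $\psi^\top\psi+\Sigma^{-1}=\Psi^\top\Psi+\epsilon\,s_1 s_1^\top$ and (ii) $\psi^\top \bm y + \Sigma^{-1}\mu = \Psi\begin{bmatrix}0\\ \bm y\end{bmatrix}$. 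I would verify both by expanding in the natural $1{+}N$ block structure: for (i) one computes $\Psi^2$ explicitly (using symmetry of $\bm\Omega$ and $\Psi$) and checks that the $(1,1)$ block differs from $\psi^\top\psi+\Sigma^{-1}$ by exactly $\epsilon$, while the other blocks coincide term by term with $\bm\Omega^2+\bm 1\bm 1^\top+2\gamma^{-1}\bm\Omega+\gamma^{-2}I$; for (ii) the off-diagonal entries of $\psi^\top \bm y$ combine with $\gamma^{-1}\bm y$ to reproduce $(\bm\Omega+\gamma^{-1}I)\bm y$, which is the lower block of $\Psi\begin{bmatrix}0\\ \bm y\end{bmatrix}$.

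Second, I would rewrite $\bm{\theta}_{\text{LS}}$ using the same building blocks as $\mu^\star$. Since $\Psi$ is symmetric and invertible, $\Psi^{-1}=(\Psi^\top\Psi)^{-1}\Psi^\top=(\Psi^\top\Psi)^{-1}\Psi$, hence
\[
  \bm{\theta}_{\text{LS}} \;=\; \Psi^{-1}\begin{bmatrix}0\\ \bm y\end{bmatrix} \;=\; (\Psi^\top\Psi)^{-1}\Psi\begin{bmatrix}0\\ \bm y\end{bmatrix},
\]
while by the preceding step
\[
  \mu^\star \;=\; \bigl(\Psi^\top\Psi+\epsilon\,s_1 s_1^\top\bigr)^{-1}\Psi\begin{bmatrix}0\\ \bm y\end{bmatrix}.
\]

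Finally, I would subtract and apply Sherman--Morrison to $A=\Psi^\top\Psi$ with rank-one perturbation $\epsilon\,s_1 s_1^\top$:
\[
  (\Psi^\top\Psi)^{-1} - \bigl(\Psi^\top\Psi+\epsilon\,s_1 s_1^\top\bigr)^{-1}
  \;=\; \frac{\epsilon\,(\Psi^\top\Psi)^{-1} s_1 s_1^\top (\Psi^\top\Psi)^{-1}}{1+\epsilon\,s_1^\top(\Psi^\top\Psi)^{-1}s_1}.
\]
Right-multiplying by $\Psi\begin{bmatrix}0\\ \bm y\end{bmatrix}$ and collapsing $(\Psi^\top\Psi)^{-1}\Psi=\Psi^{-1}$ on the rightmost factor gives precisely the claimed expression. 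I expect step one (verifying the two block identities matching $\Sigma^{-1}$ with $\Psi^\top\Psi$ plus a single rank-one bump) to be the main bookkeeping obstacle; everything afterward is formulaic.
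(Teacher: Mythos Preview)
Your proposal is correct and follows essentially the same route as the paper: the paper likewise writes the posterior mean with design matrix $\Phi=[\bm 1\ \bm\Omega]$ (your $\psi$), verifies block-by-block that $\Sigma^{-1}+\Phi^\top\Phi=\Psi^\top\Psi+\epsilon\,s_1 s_1^\top$ and that $\Phi^\top\bm y+\gamma^{-1}[0;\bm y]=\Psi^\top[0;\bm y]$, then applies Sherman--Morrison and collapses $(\Psi^\top\Psi)^{-1}\Psi^\top=\Psi^{-1}$. Your only cosmetic difference is invoking the symmetry of $\Psi$ up front rather than carrying the transpose to the end.
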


\begin{proof}
Let $\bm{\theta} = [b, \bm{\alpha}]^\top$ and $\Phi \in \mathbb{R}^{N \times (N+1)}$ with its $i$-th line being denoted as $\bm{\phi}_i$ such that
\[ 
\bm{\phi}_i = [1, k(\bm{x}_i, \bm{x}_1), \ldots, k(\bm{x}_i, \bm{x}_N)]^\top .
\]  
The posterior of the aforementioned model is given by:
\begin{align*}
     p\big(b, \bm{\alpha} \, | \, \mathcal{D}, \sigma^2 = 1/2 \big) &\propto  \mathcal{N}(\bm{\theta} | \mu, \Sigma) \prod_{i=1}^{N} \mathcal{N}(y_i | \bm{\phi}_i^\top \bm{\theta}, \sigma^2)\\ &= \mathcal{N}(\bm{\theta} | \mu^\star, \Sigma^\star),
\end{align*}
with parameters $\mu^\star$ and $\Sigma^\star$:
\[
\Sigma^\star = \Bigg( \Sigma^{-1} + \Phi^\top \Phi \Bigg)^{-1} \text{ and } \mu^\star = \Sigma^{\star} \Bigg(\Phi^\top \bm{y}  +  \gamma^{-1}
    \begin{bmatrix}
        0 \\
        \bm{y}
    \end{bmatrix}     \Bigg).
\]
As the mean and mode of a multivariate normal random variable coincide, $\mu^\star$ is also the MAP estimate for $\bm{\theta} = [b, \alpha]^\top$. Note that:
\begin{align*}
\mu^\star = \Sigma^\star 
\begin{bmatrix}
  0~~~~     & \mathbf{1}^\top  \\ 
\mathbf{1}~~~~ & \boldsymbol{\Omega} + \gamma^{-1}I
\end{bmatrix}^\top
\begin{bmatrix}
  0 \\  \bm{y}
\end{bmatrix} =
\Sigma^\star 
\Psi^\top
\begin{bmatrix}
  0 \\  \bm{y}
\end{bmatrix}.
\end{align*}
We can also write $\Psi^\top \Psi$ as
\begin{equation*}
    \begin{bmatrix}
  N     & \mathbf{1}^\top \Big(\Omega  + \gamma^{-1} I \Big) \\ 
 \Big(\Omega  + \gamma^{-1} I \Big) \mathbf{1} & \mathbf{1} \mathbf{1}^\top + \Big(\Omega + \gamma^{-1} I\Big)^\top \Big(\Omega + \gamma^{-1} I\Big) 
\end{bmatrix}.
\end{equation*}

Observe that $\Phi$ is essentially the kernel matrix $\Omega$ padded with ones on the left, i.e.:
\[
    \Phi = \begin{bmatrix}
      \mathbf{1} & \Omega
    \end{bmatrix},
\]
so that:
\[
    \Phi^\top \Phi = \begin{bmatrix}
      \mathbf{1}^\top \\
      \Omega^\top
    \end{bmatrix} \begin{bmatrix}
      \mathbf{1} & 
      \Omega
    \end{bmatrix}
    = \begin{bmatrix}
      N & \mathbf{1}^\top \Omega\\
      \Omega^\top \mathbf{1} & \Omega^\top \Omega
    \end{bmatrix}.
\]
Notice that the matrix $\Sigma^{-1} $ is given by
\[
\Sigma^{-1}  =
 \Psi^\top \Psi + \begin{bmatrix}
   \sqrt{\epsilon}\\
   0\\
   \vdots\\
   0
 \end{bmatrix}  \begin{bmatrix}
   \sqrt{\epsilon}, 0, \ldots, 0
 \end{bmatrix} - \Phi^\top \Phi, 
\]
and thus, the matrix $(\Sigma^{\star})^{-1}$ becomes
\[
 (\Sigma^{\star})^{-1} =
 \Sigma^{-1} + \Phi^\top \Phi  =
 \Psi^\top \Psi + \begin{bmatrix}
   \sqrt{\epsilon}\\
   0\\
   \vdots\\
   0
 \end{bmatrix}  \begin{bmatrix}
   \sqrt{\epsilon}, 0, \ldots, 0
 \end{bmatrix},
\]
which can we rewritten as:
\[
 (\Sigma^{\star})^{-1} =
 \Psi^\top \Psi + \epsilon s_1 s_1^\top,
\]
where $s_1$ is a $(N+1)$-dimensional column vector of zeros except for a $1$ in its first element.

Applying Sherman-Morrison's matrix inversion identity on $(\Sigma^\star)^{-1}$ and rearranging the terms we get the expression below:
\begin{equation*}
 \Sigma^{\star} = (\Psi^\top \Psi )^{-1} -  \frac{ \epsilon} {1 + \epsilon s_1 (\Psi^\top \Psi)^{-1} s_1^\top } (\Psi^\top \Psi)^{-1} s_1 s_1^\top (\Psi^\top \Psi)^{-1}.
\end{equation*}
Thus, we can rewrite the vector $\mu^\star$ as
\begin{equation*}
\mu^\star = \bigg\{(\Psi^\top \Psi)^{-1} \Psi^\top \left. - \frac{ \epsilon} {1 + \epsilon s_1 (\Psi^\top \Psi)^{-1} s_1^\top } (\Psi^\top \Psi)^{-1} s_1 s_1^\top (\Psi^\top \Psi)^{-1} \Psi^\top \right\} \begin{bmatrix}
  0 \\  \bm{y}
\end{bmatrix}.
\end{equation*}
Finally, we are able to compute the gap $\bm{\theta}_{\text{LS}} - \mu^*$:
\begin{equation*}
\begin{split}
& \bm{\theta}_{\text{LS}} - \mu^* = \bigg\{ \Psi^{-1} - (\Psi^\top \Psi)^{-1} \Psi^\top \\
& \left. + \frac{ \epsilon} {1 + \epsilon s_1 (\Psi^\top \Psi)^{-1} s_1^\top } (\Psi^\top \Psi)^{-1} s_1 s_1^\top (\Psi^\top \Psi)^{-1} \Psi^\top \right\} \begin{bmatrix}
  0 \\  \bm{y}
\end{bmatrix}.        
\end{split}
\end{equation*}
Since $\Psi$ is square and its inverse $\Psi^{-1}$ exists, which is true given its definition in Eq. \eqref{eq:lssvr_sys}, we get $\Psi^{-1} = (\Psi^\top \Psi)^{-1} \Psi^\top$ and we obtain the desired result.
\end{proof}

Analyzing this gap in the limit as $\epsilon$ shrinks, we can derive the following equivalency result, which incurs directly in the subsequent corollary.

\begin{thm}
    As  $\epsilon$ decreases towards zero, the parameters $\bm{\theta}_\text{LS}$ of LS-SVR coincide with the    MAP estimate $\mu^\star$ of the Bayesian $\epsilon$-LS-SVR defined over the same dataset with same hyper-parameters. 
\end{thm}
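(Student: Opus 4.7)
The plan is to reduce the theorem to an elementary limit computation using the previous lemma. The lemma already provides a closed-form expression for the gap
\[
\bm{\theta}_{\text{LS}} - \mu^\star = \frac{\epsilon}{1 + \epsilon\, s_1 (\Psi^\top \Psi)^{-1} s_1^\top}\, (\Psi^\top \Psi)^{-1} s_1 s_1^\top \Psi^{-1} \begin{bmatrix} 0 \\ \bm{y} \end{bmatrix},
\]
so the theorem amounts to checking that this vector tends to the zero vector as $\epsilon \to 0^+$. I would therefore begin by isolating the $\epsilon$-dependence: only the scalar prefactor depends on $\epsilon$, while the matrix–vector product on the right is a fixed quantity determined by the dataset, the kernel and the regularization parameter $\gamma$.

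Next, I would verify that all the matrices appearing in the expression are well-defined and independent of $\epsilon$. The matrix $\Psi$ is invertible by construction (this is exactly the assumption that makes Eq.~\eqref{eq:lssvr_sys} uniquely solvable), so $\Psi^{-1}$ exists, and consequently $(\Psi^\top \Psi)^{-1} = \Psi^{-1} \Psi^{-\top}$ also exists. Hence the vector
\[
\bm{v} := (\Psi^\top \Psi)^{-1} s_1 s_1^\top \Psi^{-1} \begin{bmatrix} 0 \\ \bm{y} \end{bmatrix}
\]
is a fixed, finite element of $\mathbb{R}^{N+1}$, and the scalar $\kappa := s_1 (\Psi^\top \Psi)^{-1} s_1^\top$ is a fixed nonnegative real number (in fact it equals the top-left entry of $(\Psi^\top \Psi)^{-1}$, which is nonnegative since $(\Psi^\top \Psi)^{-1}$ is positive semidefinite).

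I would then pass to the limit. Since $\kappa$ is a fixed scalar, the denominator $1 + \epsilon \kappa$ converges to $1$ as $\epsilon \to 0^+$, while the numerator $\epsilon$ converges to $0$; therefore
\[
\lim_{\epsilon \to 0^+} \frac{\epsilon}{1 + \epsilon\, \kappa}\, \bm{v} = 0 \cdot \bm{v} = \bm{0}.
\]
Substituting into the lemma's formula gives $\lim_{\epsilon \to 0^+}(\bm{\theta}_{\text{LS}} - \mu^\star) = \bm{0}$, which is exactly the claim that the MAP estimate $\mu^\star$ of the Bayesian $\epsilon$-LS-SVR converges to the LS-SVR solution.

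I do not anticipate a genuine obstacle: once the lemma is in hand, the result is a one-line continuity argument. The only subtlety worth stating explicitly is the invertibility of $\Psi$ (needed so that $\bm{v}$ is a well-defined finite vector and the identity $\Psi^{-1} = (\Psi^\top \Psi)^{-1} \Psi^\top$ used in the lemma's proof remains valid), which is inherited from the standing assumption that the LS-SVR linear system \eqref{eq:lssvr_sys} has a unique solution. If desired, a short corollary could also record the rate of convergence, noting that the gap vanishes linearly in $\epsilon$ with leading constant $\bm{v}$.
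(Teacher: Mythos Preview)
Your proposal is correct and follows essentially the same route as the paper: invoke the lemma's closed-form gap, observe that only the scalar prefactor $\epsilon/(1+\epsilon\,s_1(\Psi^\top\Psi)^{-1}s_1^\top)$ depends on $\epsilon$, and let $\epsilon\to 0^+$ to conclude the gap vanishes. Your version is in fact more carefully spelled out (explicitly isolating $\bm{v}$ and $\kappa$, justifying invertibility of $\Psi$, and noting the linear rate) than the paper's one-line limit computation.
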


 \begin{proof} Using Lemma 1, as $\epsilon \xrightarrow[]{+} 0$, the gap $\bm{\theta}_{\text{LS}} - \mu^\star$ tends to
\begin{align*}
 \lim_{\epsilon \xrightarrow[]{+} 0} \frac{ \epsilon} {1 + \epsilon s_1 (\Psi^\top \Psi)^{-1} s_1^\top } (\Psi^\top \Psi)^{-1} s_1 s_1^\top \Psi^{-1} \begin{bmatrix}
  0 \\  \bm{y}
\end{bmatrix},
\end{align*}
which equals zero. Thus, as $\epsilon$ decreases towards zero, the gap  $\bm{\theta}_{\text{LS}} - \mu^\star$ vanishes and $\mu^\star$ coincides with the LS-SVR estimates.
\end{proof}

\begin{corol}
    For any $\bm{x} \in \mathbb{R}^D$, as $\epsilon$ decreases towards zero, the LS-SVR prediction using Equation \ref{aproxi_function} coincides with the ones obtained using the same rule but using the MAP estimates of the respective Bayesian $\epsilon$-LS-SVR.
\end{corol}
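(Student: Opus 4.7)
The plan is to leverage the Theorem directly: once the two parameter vectors are shown to coincide in the limit, the corollary reduces to observing that the prediction rule in Equation \ref{aproxi_function} is a \emph{linear} (hence continuous) function of $\bm{\theta} = [b, \bm{\alpha}]^\top$. So the corollary is essentially a one-line consequence of the Theorem combined with continuity of evaluation.

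Concretely, I would proceed as follows. First, rewrite the prediction rule at a query point $\bm{x} \in \mathbb{R}^D$ in vector form by defining $\bm{k}(\bm{x}) = [1,\, k(\bm{x}, \bm{x}_1),\, \ldots,\, k(\bm{x}, \bm{x}_N)]^\top$, so that Equation \ref{aproxi_function} becomes $f(\bm{x}) = \bm{k}(\bm{x})^\top \bm{\theta}$. This makes the dependence on the parameter vector transparent and, in particular, Lipschitz continuous: for any two parameter vectors $\bm{\theta}_1$ and $\bm{\theta}_2$, the difference in predictions is bounded by $|\bm{k}(\bm{x})^\top (\bm{\theta}_1 - \bm{\theta}_2)| \le \|\bm{k}(\bm{x})\|\,\|\bm{\theta}_1 - \bm{\theta}_2\|$.

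Second, I would instantiate this bound with $\bm{\theta}_1 = \bm{\theta}_{\text{LS}}$ and $\bm{\theta}_2 = \mu^\star$, and invoke the Theorem to obtain $\|\bm{\theta}_{\text{LS}} - \mu^\star\| \to 0$ as $\epsilon \to 0^+$. Since $\bm{k}(\bm{x})$ is fixed (independent of $\epsilon$), the bound above drives the prediction gap to zero, yielding coincidence in the limit. Strictly speaking, since the Theorem states that the two parameter vectors are equal in the limit, the prediction is literally the same at every $\bm{x}$, not just asymptotically close; substituting the explicit gap formula from Lemma 1 into $\bm{k}(\bm{x})^\top(\bm{\theta}_{\text{LS}} - \mu^\star)$ and taking the limit $\epsilon \to 0^+$ makes the conclusion immediate because the scalar prefactor $\epsilon/(1 + \epsilon\, s_1 (\Psi^\top\Psi)^{-1} s_1^\top)$ tends to zero while all other factors remain bounded in $\epsilon$.

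There is no real obstacle here: the Theorem does all the heavy lifting, and the corollary is a trivial continuity argument. The only small care needed is to observe that the prediction functional depends on $\bm{\theta}$ alone and not on $\epsilon$ in any hidden way (the vector $\bm{k}(\bm{x})$ and the kernel hyperparameter $c$ are shared between the LS-SVR and the Bayesian $\epsilon$-LS-SVR), so that taking the limit inside the inner product is justified without any further argument.
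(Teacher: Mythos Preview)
Your argument is correct and matches the paper's own treatment: the paper does not give a separate proof of the corollary, stating only that it ``incurs directly'' from the Theorem, which is exactly the observation you make explicit by writing the prediction rule as the linear map $f(\bm{x}) = \bm{k}(\bm{x})^\top \bm{\theta}$ and letting the parameter gap vanish. Your added care about $\bm{k}(\bm{x})$ being independent of $\epsilon$ is the only subtlety, and you handle it correctly.
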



\section{Discussion and Limitations}

The results detailed in Section \ref{sec:main_result} lay down the theoretical similarities of the LS-SVR model with the RBF kernel and the Bayesian RBF network. We were able to show how the outputs of both models can be matched, which itself depends on the same parameter inference procedure. Our probabilistic presentation, for instance, incorporates the $\epsilon$ hyperparameter in the introduced $\epsilon$-LS-SVR formulation, which corresponds to the precision of the prior given to the model bias. In the limit $\epsilon \rightarrow 0$, the bias has an improper uniform prior.


As mentioned in the Introduction, differently from \cite{gestel2002bayesian} for the LS-SVM, we consider a Bayesian approach directly in the dual formulation of the LS-SVR. We emphasize that this choice is comprehensive enough and sufficient for the proposed analysis, given that kernel-based methods in general apply only dual variables to perform their learning and predictions. 
Furthermore, although not explored in the current work, the hyperparameter inference level pursued in \cite{gestel2002bayesian} could be readily considered by choosing a (hyper)prior for $\epsilon$ and the regularization hyperparameter $\gamma$. 
However, the effective number of parameters analysis performed in \cite{gestel2002bayesian} is not straightforwardly ported to our approach, since we bypass the primal formulation. The latter also prevents us to make more detailed geometric interpretations in the primal space.

One could question if the similarities between the expressions we have derived actually translate to model equivalence.
%
Full model equivalence is a concept difficult to define. However, we have demonstrated that both formulations have the same parameter estimation problem, the same (Bayesian) model selection approach and the same predictive rules. Although we have strayed from the support vectors analogy usually defined in the primal weight formulation, which causes the above mentioned interpretation limitations, the lack of such analogy does not disqualify our claims.

Finally, while our derivations implied the use of the RBF kernel, the standard LS-SVR framework allows for any valid kernel function to be used. 
Even though the choice of a non-RBF kernel causes the RBF network correspondence to be broken, it is important to note that the resulting variant would present similar inference expressions and be comparable to a Bayesian generalized linear model \cite{gelman2013bayesian}. 
A more thorough analysis of the latter implication is left for future investigations.

\section{Illustrative experiments} 

In this section we report some empirical results to illustrate the presentation detailed so far. First, we show how the MAP estimate for the $\epsilon$-LS-SVR can get arbitrarily close to the standard LS-SVR weight vector solution $\bm{\theta}_\text{LS}$ as we choose small $\epsilon$ values. Then, we perform a simple experiment to show how to make Bayesian predictions with the formulated $\epsilon$-LS-SVR model.

\subsection{Comparison of the $\epsilon$-LS-SVR and LS-SVR models}

\begin{figure}[t] \centering
    \begin{tikzpicture}[scale=0.45] 
        \begin{axis}[xmin=0,xmax=5, ymin=0, ymax = 0.07, ytick = { 0.02, 0.04, 0.06, 0.07}, yticklabels = { 2, 4, 6, 7}, legend pos= north east,  grid=both, title=MPG, , xticklabel style = {font=\Large}, yticklabel style = {font=\Large},  title style = {font=\LARGE},every axis plot/.append style={ultra thick}, ylabel = ARSE, ylabel style = {font=\huge, at={(-0.1,0.5)}}]
        \addplot[color=blue,mark=pentagon*, mark options={color=magenta}] table [x=Missing, y=RMSE, col sep=comma, row sep=\\, title=Compression] {
          Missing,	RMSE\\
            0      , 0.0628   \\
            0.5000 , 0.0381   \\
            1.0000 , 0.0232   \\
            1.5000 , 0.0140   \\
            2.0000 , 0.0085  \\
            2.5000 , 0.0052  \\
            3.0000 , 0.0031  \\
            3.5000 , 0.0019  \\
            4.0000 , 0.0011  \\
            4.5000 , 0.0007 \\
            5.0000 , 0.0004 \\};             
        \end{axis}
    \end{tikzpicture}  
    \begin{tikzpicture}[scale=0.45] 
        \begin{axis}[ xmin=0, xmax = 5, ymin = 0,ymax = 0.04, ytick = { 0.01, 0.02, 0.03, 0.04}, yticklabels = { 1, 2, 3, 4}, legend pos= north east,  grid=both, title=Compression, xticklabel style = {font=\Large}, yticklabel style = {font=\Large},title style = {font=\LARGE},every axis plot/.append style={ultra thick}]
        \addplot[color=blue,mark=pentagon*, mark options={color=magenta}] table [x=Missing, y=RMSE, col sep=comma, row sep=\\] {
            Missing,	RMSE\\
            0      , 0.0399669296336829   \\
            0.5000 , 0.0242626647407492   \\
            1.0000 , 0.0147239694594507   \\
            1.5000 , 0.00893345481733988  \\
            2.0000 , 0.00541948750910657  \\
            2.5000 , 0.00328748029117789  \\
            3.0000 , 0.00199410291524305  \\
            3.5000 , 0.00120953802530745  \\
            4.0000 , 0.000733641567211602 \\
            4.5000 , 0.000444983340628700 \\
            5.0000 , 0.000269898701652869 \\ };    
        \end{axis}
    \end{tikzpicture} \\
    \hspace{0.25in} $-\log\epsilon$\\
    \begin{tikzpicture}[scale=0.45] 
        \begin{axis}[ xmin=0, xmax = 5, ymax = 0.07, ytick = {0.02, 0.04, 0.06, 0.07}, yticklabels = { 2, 4, 6, 7},ymin=0, legend pos= north east,  grid=both,title=Stocks , xticklabel style = {font=\Large},  yticklabel style = {font=\Large},title style = {font=\LARGE},every axis plot/.append style={ultra thick}, ylabel = ARSE, ylabel style = {font=\huge, at={(-0.1,0.5)}}]]
        \addplot[color=blue,mark=pentagon*, mark options={color=magenta}] table [x=Missing, y=RMSE, col sep=comma, row sep=\\] {
          Missing,	RMSE\\
            0      , 0.0615837721347982   \\
            0.5000 , 0.0374459911049471   \\
            1.0000 , 0.0227466936998484   \\
            1.5000 , 0.0138093092102190   \\
            2.0000 , 0.00838046392458909  \\
            2.5000 , 0.00508473688337742  \\
            3.0000 , 0.00308468506396798  \\
            3.5000 , 0.00187119021007842  \\
            4.0000 , 0.00113502038571168  \\
            4.5000 , 0.000688456363448130 \\
            5.0000 , 0.000417581552633746 \\};             
        \end{axis}
    \end{tikzpicture} 
    \begin{tikzpicture}[scale=0.45] 
        \begin{axis}[ xmin=0, xmax = 5, ymax = 0.05, ymin=0,ytick = {0.015,0.03,0.045,0.05}, yticklabels = {1.5,3,4.5,5}, legend pos= north east,  grid=both, title=Bostton Housing , xticklabel style = {font=\Large},  yticklabel style = {font=\Large},title style = {font=\LARGE},every axis plot/.append style={ultra thick}]
        \addplot[color=blue,mark=pentagon*, mark options={color=magenta}] table [x=Missing, y=RMSE, col sep=comma, row sep=\\] {
          Missing,	RMSE\\
            0      , 0.0466017844188908   \\
            0.5000 , 0.0283009506357296   \\
            1.0000 , 0.0171784949735541   \\
            1.5000 , 0.0104241092844599   \\
            2.0000 , 0.00632431836599472  \\
            2.5000 , 0.00383654681802085  \\
            3.0000 , 0.00232722386805047  \\
            3.5000 , 0.00141162115269548  \\
            4.0000 , 0.000856224078834130 \\
            4.5000 , 0.000519338137876429 \\
            5.0000 , 0.000314998911544298 \\};             
        \end{axis}
    \end{tikzpicture} \\
    \hspace{0.25in} $-\log\epsilon$
    \caption{Illustration of the convergence of the predictions of the $\epsilon$-LS-SVR and the LS-SVR model as $\epsilon$ decreases towards zero. The vertical axis shows the Average Root Square Error (ARSE) between the predictions obtained by both models.}
\label{fig:illu1}
\end{figure}
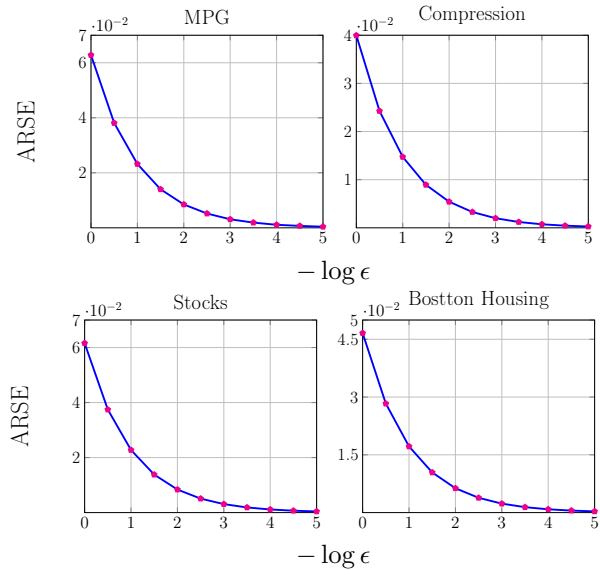

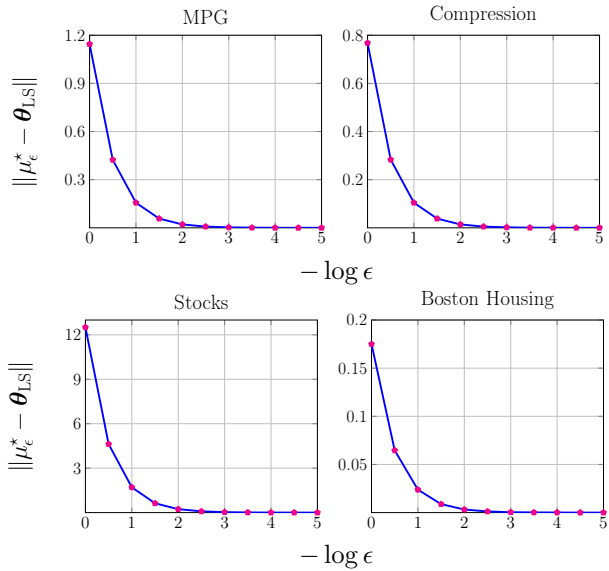
\begin{figure}[t] \centering
    \begin{tikzpicture}[scale=0.45] 
        \begin{axis}[xmin=0,xmax=5, ymin=0, ymax = 1.2, ytick = { 0.3, 0.6, 0.9, 1.2}, yticklabels = { 0.3, 0.6, 0.9, 1.2}, legend pos= north east,  grid=both, title=MPG, , xticklabel style = {font=\Large}, yticklabel style = {font=\Large},  title style = {font=\LARGE},every axis plot/.append style={ultra thick}, ylabel = $\|\mu_\epsilon^\star - \bm{\theta}_\text{LS} \|$, ylabel style = {font=\huge, at={(-0.1,0.5)}}]
        \addplot[color=blue,mark=pentagon*, mark options={color=magenta}] table [x=Missing, y=RMSE, col sep=comma, row sep=\\, title=Compression] {
          Missing,	RMSE\\
            0, 1.1448 \\
            0.5000, 0.4235 \\
            1.0000, 0.1564 \\
            1.5000, 0.0576 \\
            2.0000, 0.0212 \\
            2.5000, 0.0078 \\
            3.0000, 0.0029 \\
            3.5000, 0.0011 \\
            4.0000, 0.0004 \\
            4.5000, 0.0001 \\
            5.0000, 0.0001 \\};             
        \end{axis}
    \end{tikzpicture}  
    \begin{tikzpicture}[scale=0.45] 
        \begin{axis}[ xmin=0, xmax = 5, ymin = 0,ymax = 0.8, ytick = { 0.2, 0.4, 0.6, 0.8}, yticklabels = { 0.2, 0.4, 0.6, 0.8}, legend pos= north east,  grid=both, title=Compression, xticklabel style = {font=\Large}, yticklabel style = {font=\Large},title style = {font=\LARGE},every axis plot/.append style={ultra thick}]
        \addplot[color=blue,mark=pentagon*, mark options={color=magenta}] table [x=Missing, y=RMSE, col sep=comma, row sep=\\] {
            Missing,	RMSE\\
            0, 0.7680 \\
            0.5000, 0.2830 \\
            1.0000, 0.1042 \\
            1.5000, 0.0384 \\
            2.0000, 0.0141 \\
            2.5000, 0.0052 \\
            3.0000, 0.0019 \\
            3.5000, 0.0007 \\
            4.0000, 0.0003 \\
            4.5000, 0.0001 \\
            5.0000, 0.0000 \\ };    
        \end{axis}
    \end{tikzpicture} \\
    \hspace{0.25in} $-\log\epsilon$\\
    \begin{tikzpicture}[scale=0.45] 
        \begin{axis}[ xmin=0, xmax = 5, ymax = 13, ytick = {3,6,9,12}, yticklabels = { 3,6,9,12},ymin=0, legend pos= north east,  grid=both,title=Stocks , xticklabel style = {font=\Large},  yticklabel style = {font=\Large},title style = {font=\LARGE},every axis plot/.append style={ultra thick}, ylabel = $\|\mu_\epsilon^\star - \bm{\theta}_\text{LS} \|$, ylabel style = {font=\huge, at={(-0.1,0.5)}}]]
        \addplot[color=blue,mark=pentagon*, mark options={color=magenta}] table [x=Missing, y=RMSE, col sep=comma, row sep=\\] {
          Missing,	RMSE\\
            0, 12.5081 \\
            0.5000, 4.6245  \\
            1.0000, 1.7065  \\
            1.5000, 0.6289  \\
            2.0000, 0.2316  \\
            2.5000, 0.0853  \\
            3.0000, 0.0314  \\
            3.5000, 0.0115  \\
            4.0000, 0.0042  \\
            4.5000, 0.0016  \\
            5.0000, 0.0006 \\};             
        \end{axis}
    \end{tikzpicture} 
    \begin{tikzpicture}[scale=0.45] 
        \begin{axis}[ xmin=0, xmax = 5, ymax = 0.2, ymin=0,ytick = {0.05,0.1,0.15,0.2}, yticklabels = { 0.05,0.1,0.15,0.2}, legend pos= north east,  grid=both, title=Boston Housing , xticklabel style = {font=\Large},  yticklabel style = {font=\Large},title style = {font=\LARGE},every axis plot/.append style={ultra thick}]
        \addplot[color=blue,mark=pentagon*, mark options={color=magenta}] table [x=Missing, y=RMSE, col sep=comma, row sep=\\] {
          Missing,	RMSE\\
            0      , 0.1750 \\
            0.5000 , 0.0646 \\
            1.0000 , 0.0238 \\
            1.5000 , 0.0088 \\
            2.0000 , 0.0032 \\
            2.5000 , 0.0012 \\
            3.0000 , 0.0004 \\
            3.5000 , 0.0002 \\
            4.0000 , 0.0001 \\
            4.5000 , 0.0000 \\
            5.0000 , 0.0000	\\};             
        \end{axis}
    \end{tikzpicture} \\
    \hspace{0.25in} $-\log\epsilon$
    \caption{Illustration of the convergence of $\mu_\epsilon^\star$ towards $\bm{\theta}_\text{LS}$ as $\epsilon$ decreases towards zero. The vertical axis shows the Euclidean distance between $\bm{\theta}_\text{LS}$ and the MAP parameters estimate $\mu_\epsilon^{\star}$ of the $\epsilon$-LS-SVR.}
\label{fig:illu2}
\end{figure}

To empirically illustrate our main theoretical result, we performed a set of experiments using $4$ datasets a available at UCI machine learning repository \cite{Dua:2019}. We progressively decrease the $\epsilon$-LS-SVR model $\epsilon$ value and perform MAP estimation for its parameters. After that, we compute the Average Root Square Error (ARSE) between the estimates of both methods, given by $\text{ARSE} = \frac{1}{N}\sum_{i=1}^N \left|\mu_i^{\epsilon\text{-LS-SVR}} - y_i^{\text{LS-SVR}}\right|$, where $\mu_i^{\epsilon\text{-LS-SVR}}$ is the predicted mean of the $\epsilon$-LS-SVR and $y_i^{\text{LS-SVR}}$ is the prediction of the standard LS-SVR. We set $\gamma= 0.5$ and $c^2 = 1$. 

The obtained results are presented in Fig. \ref{fig:illu1}. As can be noticed, the estimates of both methods converge to the same values as $\epsilon$ decreases. A similar result is obtained when comparing the parameters estimated by the two learning strategies. Fig. \ref{fig:illu2} shows the Euclidean distance between the weight vectors obtained by the $\epsilon$-LS-SVR and LS-SVR models. These results coincide with the behaviour outlined in the theorem that we have introduced in Section \ref{sec:main_result}.

\subsection{Bayesian predictions with the LS-SVR model}

\begin{figure}[t]
    \centering
    \includegraphics[width=.75\columnwidth]{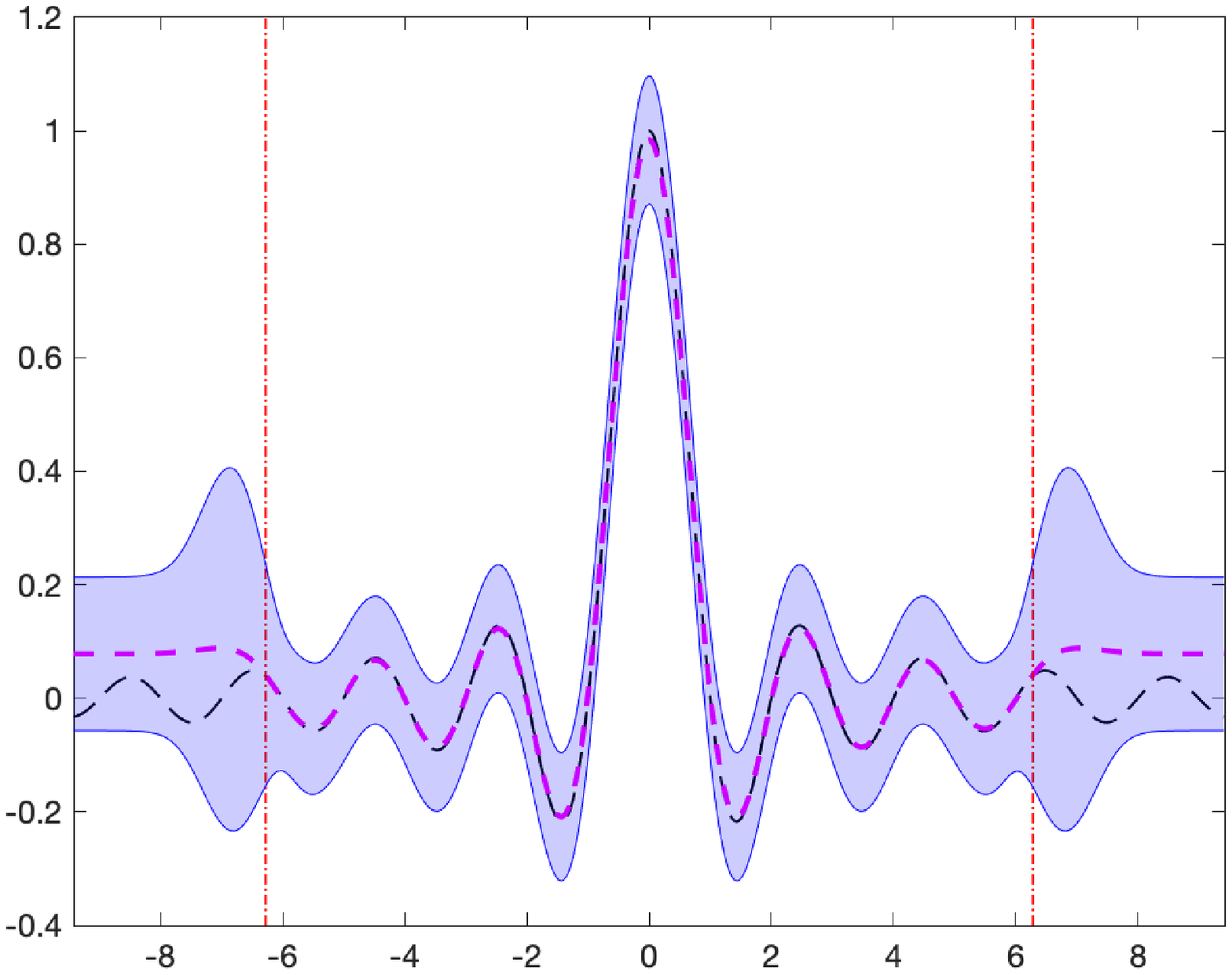}
    \caption{Illustration of the use of $\epsilon$-LS-SVR to predict the univariate (normalized) sinc function, shown as the dashed black  line. The dashed magenta line shows the average prediction of our model and the blue bands around it correspond to $\pm$ one standard deviation. The vertical thin red dashed lines bounds the area in which samples were observed.}
    \label{fig:predictions}
\end{figure}

Once in possession of the $\epsilon$-LS-SVR posterior, one can compute statistics of an arbitrary test function. For example, we can compute the expected value and variance of the LS-SVR prediction rule, in Eq. \eqref{aproxi_function}, as
\begin{align}
    &\E_{\bm{\theta} \,|\, \mathcal{D}, \sigma^2}[f(\bm{x})] = b + \sum_{i=1}^{N} k(\bm{x}, \bm{x}_i) {\mu^{\star}_{i+1}} \label{eq:pred1} ,\\
    &\Var_{\bm{\theta}\,|\, \mathcal{D}, \sigma^2}[f(\bm{x})] =  \sum_{i=1}^{N} \sum_{j=1}^{N} k(\bm{x}, \bm{x}_i) k(\bm{x}, \bm{x}_j) \Sigma^\star_{i, j}. \label{eq:pred2}
\end{align}

We illustrate these capabilities by modeling the univariate (normalized) sinc function:
\[
    y_i = \frac{\sin(\pi x_i)}{\pi x_i}, \quad \forall x_i \in \mathbb{R}.
\]

For this purpose, we created a dataset by taking $1200$  input points uniformly drawn in the range $x \in [-2\pi, 2\pi]$ and evaluating the sinc function in each of them. We used $c=\gamma=1$ and $\epsilon=10^{-4}$. After the $\epsilon$-LS-SVR posteriori has been computed, we used Eqs. \eqref{eq:pred1}--\eqref{eq:pred2} to predict the values of the sinc function for new values in the range $x \in [-3\pi, 3\pi]$. The results are shown in Fig. \ref{fig:illu2}. Note that, as expected, variances are lower in the regions contemplated in the observed data than outside it.  
This experiment has been repeated ten, times yielding similar results.

\section{Conclusion}


We have presented a theorem showing a relationship between LS-SVR and Bayesian RBF networks. In a nutshell, the result provides a way to build a Bayesian RBF network which, when taking MAP estimates, presents theoretical similarities to a LS-SVR model. Such relationship opens a path to improve LS-SVR models with well-established advances in Bayesian methodology.

As a byproduct, we have introduced $\epsilon$-LS-SVR, a very particular Bayesian RBF network, which outlines new foundations for building Bayesian LS-SVRs. The posterior from our model can be used, for instance, to inform decision making processes (\textit{e.g}. computing expected utilities) or to provide distributions for a new output $y^\prime$ given a new input vector $\bm{x}^\prime$. 

A natural improvement would be to automate the selection of the regularizer $\gamma^{-1}$, the bias prior precision $\epsilon$ and the observation noise $\sigma^2$. This can be easily achieved by placing priors on these parameters. Besides freeing us from laborious parameter tuning, priors can also contain the subjective knowledge of a domain expert. While the introduction of such priors might break conjugacy, we can still resort to full Bayesian inference (\textit{e.g.}, using Markov Chain Monte Carlo), which would allow us to still recognize our uncertainty about the estimated parameters, in the form of a posterior distribution. Another promising direction would be to transfer information between different $\epsilon$-LS-SVR models using hierarchical models. All those investigations are left for future work.

\bibliography{Bibliography,IEEEabrv}

\begin{thebibliography}{28}
\providecommand{\natexlab}[1]{#1}
\providecommand{\url}[1]{\texttt{#1}}
\expandafter\ifx\csname urlstyle\endcsname\relax
  \providecommand{\doi}[1]{doi: #1}\else
  \providecommand{\doi}{doi: \begingroup \urlstyle{rm}\Url}\fi

\bibitem[Barber and Schottky(1998)]{barber1998radial}
David Barber and Bernhard Schottky.
\newblock Radial basis functions: a bayesian treatment.
\newblock In \emph{Advances in Neural Information Processing Systems}, pages
  402--408, 1998.

\bibitem[Chen et~al.(1991)Chen, Cowan, and Grant]{chen1991orthogonal}
Sheng Chen, Colin~FN Cowan, and Peter~M Grant.
\newblock Orthogonal least squares learning algorithm for radial basis function
  networks.
\newblock \emph{IEEE Transactions on neural networks}, 2\penalty0 (2):\penalty0
  302--309, 1991.

\bibitem[Chu et~al.(2004)Chu, Keerthi, and Ong]{chu2004bayesian}
Wei Chu, S~Sathiya Keerthi, and Chong~Jin Ong.
\newblock Bayesian support vector regression using a unified loss function.
\newblock \emph{IEEE transactions on neural networks}, 15\penalty0
  (1):\penalty0 29--44, 2004.

\bibitem[Cristianini et~al.(2000)Cristianini, Shawe-Taylor,
  et~al.]{cristianini2000introduction}
Nello Cristianini, John Shawe-Taylor, et~al.
\newblock \emph{An introduction to support vector machines and other
  kernel-based learning methods}.
\newblock Cambridge university press, 2000.

\bibitem[De~Brabanter et~al.(2010)De~Brabanter, De~Brabanter, Suykens, and
  De~Moor]{de2010approximate}
Kris De~Brabanter, Jos De~Brabanter, Johan~AK Suykens, and Bart De~Moor.
\newblock Approximate confidence and prediction intervals for least squares
  support vector regression.
\newblock \emph{IEEE Transactions on Neural Networks}, 22\penalty0
  (1):\penalty0 110--120, 2010.

\bibitem[Dheeru and Karra~Taniskidou(2019)]{Dua:2019}
Dua Dheeru and Efi Karra~Taniskidou.
\newblock {UCI} machine learning repository, 2019.
\newblock URL \url{http://archive.ics.uci.edu/ml}.

\bibitem[Drucker et~al.(1997)Drucker, Burges, Kaufman, Smola, and
  Vapnik]{drucker1997support}
Harris Drucker, Christopher~JC Burges, Linda Kaufman, Alex~J Smola, and
  Vladimir Vapnik.
\newblock Support vector regression machines.
\newblock In \emph{Advances in neural information processing systems}, pages
  155--161, 1997.

\bibitem[Flam-Shepherd et~al.(2017)Flam-Shepherd, Requeima, and
  Duvenaud]{flam2017mapping}
Daniel Flam-Shepherd, James Requeima, and David Duvenaud.
\newblock Mapping gaussian process priors to bayesian neural networks.
\newblock In \emph{NIPS Bayesian deep learning workshop}, 2017.

\bibitem[Gao et~al.(2002)Gao, Gunn, Harris, and Brown]{gao2002probabilistic}
Junbin~B Gao, Steve~R. Gunn, Chris~J. Harris, and Martin Brown.
\newblock A probabilistic framework for svm regression and error bar
  estimation.
\newblock \emph{Machine Learning}, 46\penalty0 (1-3):\penalty0 71--89, 2002.

\bibitem[{Garriga-Alonso} et~al.(2019){Garriga-Alonso}, Aitchison, and
  Rasmussen]{garriga2018deep}
Adri{\`a} {Garriga-Alonso}, Laurence Aitchison, and Carl~Edward Rasmussen.
\newblock Deep convolutional networks as shallow {G}aussian processes.
\newblock In \emph{International Conference on Learning Representations}, 2019.
\newblock URL \url{https://openreview.net/forum?id=Bklfsi0cKm}.

\bibitem[Gelman et~al.(2013)Gelman, Stern, Carlin, Dunson, Vehtari, and
  Rubin]{gelman2013bayesian}
Andrew Gelman, Hal~S Stern, John~B Carlin, David~B Dunson, Aki Vehtari, and
  Donald~B Rubin.
\newblock \emph{Bayesian data analysis}.
\newblock Chapman and Hall/CRC, 2013.

\bibitem[Gestel et~al.(2002)Gestel, Suykens, Lanckriet, Lambrechts, Moor, and
  Vandewalle]{gestel2002bayesian}
T~Van Gestel, Johan~AK Suykens, Gert Lanckriet, Annemie Lambrechts, B~De Moor,
  and Joos Vandewalle.
\newblock Bayesian framework for least-squares support vector machine
  classifiers, gaussian processes, and kernel fisher discriminant analysis.
\newblock \emph{Neural computation}, 14\penalty0 (5):\penalty0 1115--1147,
  2002.

\bibitem[Haykin(2009)]{haykin2009neural}
Simon Haykin.
\newblock \emph{Neural networks and learning machines}.
\newblock Pearson Upper Saddle River, 3rd edition, 2009.

\bibitem[Lee et~al.(2018)Lee, Sohl-dickstein, Pennington, Novak, Schoenholz,
  and Bahri]{lee2018deep}
Jaehoon Lee, Jascha Sohl-dickstein, Jeffrey Pennington, Roman Novak, Sam
  Schoenholz, and Yasaman Bahri.
\newblock Deep neural networks as gaussian processes.
\newblock In \emph{International Conference on Learning Representations}, 2018.

\bibitem[Neal(1995)]{neal1995bayesian}
Radford~M Neal.
\newblock \emph{Bayesian learning for neural networks}.
\newblock PhD thesis, University of Toronto, 1995.

\bibitem[Rasmussen and Williams(2006)]{rasmussen2006gaussian}
Carl~Edward Rasmussen and Christopher~KI Williams.
\newblock \emph{Gaussian process for machine learning}.
\newblock MIT press, 2006.

\bibitem[Saunders et~al.(1998)Saunders, Gammerman, and Vovk]{saunders1998ridge}
Craig Saunders, Alexander Gammerman, and Volodya Vovk.
\newblock Ridge regression learning algorithm in dual variables.
\newblock In \emph{Proceedings of the 15th International Conference on Machine
  Learning (ICML)}, Madison, WI, USA, mar 1998.

\bibitem[Sch{\"o}lkopf et~al.(2002)Sch{\"o}lkopf, Smola, Bach,
  et~al.]{scholkopf2002learning}
Bernhard Sch{\"o}lkopf, Alexander~J Smola, Francis Bach, et~al.
\newblock \emph{Learning with kernels: support vector machines, regularization,
  optimization, and beyond}.
\newblock MIT press, 2002.

\bibitem[Seeger(2000)]{seeger2000bayesian}
Matthias Seeger.
\newblock Bayesian model selection for support vector machines, gaussian
  processes and other kernel classifiers.
\newblock In \emph{Advances in neural information processing systems}, pages
  603--609, 2000.

\bibitem[Sollich(2002)]{sollich2002bayesian}
Peter Sollich.
\newblock Bayesian methods for support vector machines: Evidence and predictive
  class probabilities.
\newblock \emph{Machine learning}, 46\penalty0 (1-3):\penalty0 21--52, 2002.

\bibitem[Sun et~al.(2017)Sun, Li, Liao, and Zhang]{sun2017bayesian}
Fuqiang Sun, Xiaoyang Li, Haitao Liao, and Xiankun Zhang.
\newblock A bayesian least-squares support vector machine method for predicting
  the remaining useful life of a microwave component.
\newblock \emph{Advances in Mechanical Engineering}, 9\penalty0 (1):\penalty0
  1687814016685963, 2017.

\bibitem[Suykens et~al.(2002)Suykens, Gestel, Brabanter, Moor, and
  Vandewalle]{suykens2002least}
J.A.K. Suykens, T.~Van Gestel, J.~De Brabanter, B.~De Moor, and J.~Vandewalle.
\newblock \emph{Least squares support vector machines}.
\newblock World Scientific, Singapore, 2002.

\bibitem[Suykens and Vandewalle(1999)]{suykens1999least}
Johan~AK Suykens and Joos Vandewalle.
\newblock Least squares support vector machine classifiers.
\newblock \emph{Neural processing letters}, 9\penalty0 (3):\penalty0 293--300,
  1999.

\bibitem[Van~Gestel et~al.(2001)Van~Gestel, Suykens, Baestaens, Lambrechts,
  Lanckriet, Vandaele, De~Moor, and Vandewalle]{van2001financial}
Tony Van~Gestel, Johan~AK Suykens, D-E Baestaens, Annemie Lambrechts, Gert
  Lanckriet, Bruno Vandaele, Bart De~Moor, and Joos Vandewalle.
\newblock Financial time series prediction using least squares support vector
  machines within the evidence framework.
\newblock \emph{IEEE Transactions on neural networks}, 12\penalty0
  (4):\penalty0 809--821, 2001.

\bibitem[Vapnik(1998)]{vapnik1998statistical}
Vladimir Vapnik.
\newblock \emph{Statistical learning theory. 1998}, volume~3.
\newblock Wiley, New York, 1998.

\bibitem[Vapnik(1999)]{vapnik1999overview}
Vladimir~Naumovich Vapnik.
\newblock An overview of statistical learning theory.
\newblock \emph{IEEE transactions on neural networks}, 10\penalty0
  (5):\penalty0 988--999, 1999.

\bibitem[Williams(1997)]{williams1997computing}
Christopher~KI Williams.
\newblock Computing with infinite networks.
\newblock In \emph{Advances in neural information processing systems}, pages
  295--301, 1997.

\bibitem[Zhu et~al.(2014)Zhu, Chen, and Xing]{zhu2014bayesian}
Jun Zhu, Ning Chen, and Eric~P Xing.
\newblock Bayesian inference with posterior regularization and applications to
  infinite latent svms.
\newblock \emph{The Journal of Machine Learning Research}, 15\penalty0
  (1):\penalty0 1799--1847, 2014.

\end{thebibliography}

\end{document}